\newtheorem{lemma}{Lemma}
\title{DEDPUL: Difference-of-Estimated-Densities-based Positive-Unlabeled Learning}
\author{
Dmitry Ivanov\\
\affiliations
National Research University Higher School of Economics\\
JetBrains Research\\
\emails
diivanov@hse.ru
}
\begin{document}

\maketitle

\begin{abstract}
   Positive-Unlabeled (PU) learning is an analog to supervised binary classification for the case when only the positive sample is clean, while the negative sample is contaminated with latent instances of positive class and hence can be considered as an unlabeled mixture. The objectives are to classify the unlabeled sample and train an unbiased PN classifier, which generally requires to identify the mixing proportions of positives and negatives first. Recently, unbiased risk estimation framework has achieved state-of-the-art performance in PU learning. This approach, however, exhibits two major bottlenecks. First, the mixing proportions are assumed to be identified, i.e. known in the domain or estimated with additional methods. Second, the approach relies on the classifier being a neural network. In this paper, we propose DEDPUL, a method that solves PU Learning without the aforementioned issues.\footnote{Implementation of DEDPUL is available at \url{https://github.com/dimonenka/DEDPUL}} The mechanism behind DEDPUL is to apply a computationally cheap post-processing procedure to the predictions of any classifier trained to distinguish positive and unlabeled data. Instead of assuming the proportions to be identified, DEDPUL estimates them alongside with classifying unlabeled sample. Experiments show that DEDPUL outperforms the current state-of-the-art in both proportion estimation and PU Classification.
\end{abstract}


\section{Introduction}\label{section_intro}


PU Classification naturally emerges in numerous real-world cases where obtaining labeled data from both classes is complicated. We informally divide the applications into three categories. In the first category, PU learning is an analog to binary classification. Here, the latter could be applied, but the former accounts for the label noise and hence is more accurate. An example is identification of disease genes. Typically, the already identified disease genes are treated as positive, and the rest unknown genes are treated as negative. Instead, \cite{yang2012desease} more accurately treat the unknown genes as unlabeled.

In the second category, PU learning is an analog to one-class classification, which learns only from positive data while making assumptions about the negative distribution. For instance, \cite{blanchard2010} show that density level set estimation methods \cite{vert2006consistency,scott2006learning} assume the negative distribution to be uniform, OC-SVM \cite{ocsvm} finds decision boundary against the origin, and OC-CNN \cite{occnn} explicitly models the negative distribution as a Gaussian. Instead, PU learning algorithms approximate the negative distribution as the difference between the unlabeled and the positive, which can be especially crucial when the distributions overlap significantly \cite{scott2009novelty}. Moreover, the outcome of the one-class approach is usually an anomaly metric, whereas PU learning can output unbiased probabilities. Examples are the tasks of anomaly detection, e.g. deceptive reviews \cite{ren2014reviews} or fraud \cite{nguyen2011ts}. 

The third category contains the cases for which neither supervised nor one-class classification could be applied. An example is identification of corruption in Russian procurement auctions \cite{ivanov2019}. An extensive data set of the auctions is available online, however it does not contain any labels of corruption. The proposed solution is to detect only successful corruptioneers by treating runner-ups as fair (positive) and winners as possibly corrupted (unlabeled) participants. PU learning may then detect suspicious patterns based on the subtle differences between the winners and the runner-ups, a task for which one-class approach is too insensitive.

\begin{figure*}[t]
    \includegraphics[width=\textwidth]{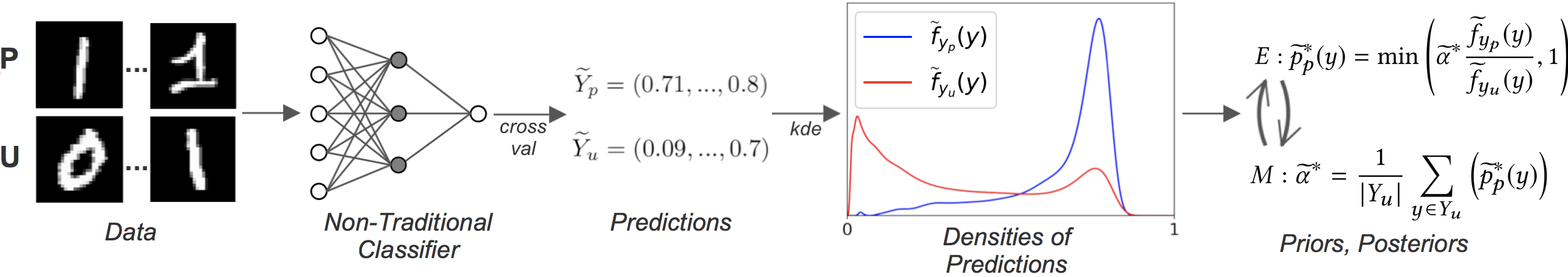}
    \caption{DEDPUL. \textit{Data:} Upper row (ones) represents positive sample $X_p$, lower row (ones and zeros) represents unlabeled sample $X_u$. \textit{Non-Traditional Classifier:} NTC is a supervised probabilistic binary classifier trained to distinguish $X_p$ from $X_u$. \textit{Predictions:} Predicted by NTC probabilities $Y_p$ and $Y_u$ for samples $X_p$ and $X_u$, obtained with cross-validation. NTC reduces dimensionality. \textit{Densities of Predictions:} Estimated probability density functions $\widetilde{f}_{y_p}(y)$ and $\widetilde{f}_{y_u}(y)$ on $Y_p$ and $Y_u$ respectively. $\widetilde{f}_{y_u}(y)$ has two distinct peaks, one of which coincides with that of $\widetilde{f}_{y_p}(y)$, while the other corresponds to the negative component of the mixture. \textit{Priors, Posteriors:} The posteriors are estimated for fixed priors using the Bayes rule and are clipped. The priors are estimated as the average posteriors. These two steps iterate until convergence. If the process converges trivially to 0, an alternative estimate $\widetilde{\alpha}_n^*$ is used instead.}
    \label{fig_dedpul}
\end{figure*}

One of the early milestones of PU classification is the paper of \cite{EN}. The paper makes two major contributions. The first contribution is the notion of Non-Traditional Classifier (NTC), which is any classifier trained to distinguish positive and unlabeled samples. Under the Selected Completely At Random (SCAR) assumption, which states that the labeling probability of any positive instance is constant, the biased predictions of NTC can be transformed into the unbiased posterior probabilities of being positive rather than negative. The second contribution is to consider the unlabeled data as simultaneously positive and negative, weighted by the opposite weights. By substituting these weights into the loss function, a PN classifier can be learned directly from PU data. These weights are equal to the posterior probabilities of being positive, estimated based on the predictions of NTC. The idea of loss function reweighting would later be adopted by the Risk Estimation framework \cite{plessis2014analysis,plessis2015convex} and its latest non-negative modification nnPU \cite{nnRE}, which is currently considered state-of-the-art. The crucial insight of this framework is that only the mixing proportion, i.e. the prior probability of being positive, is required to train an unbiased PN classifier on PU data.

While nnPU does not address mixture proportion estimation, multiple studies explicitly focus on this problem \cite{scott2014,scott2015rate,plessis2015prior}. The state-of-the-art methods are KM \cite{KM}, which is based on mean embeddings of positive and unlabeled samples into reproducing kernel Hilbert space, and TIcE \cite{davis2018tree}, which tries to find a subset of mostly positive data via decision tree induction. Another noteworthy method is AlphaMax \cite{alphamax}, which explicitly estimates the total likelihood of the data as a function of the proportions and seeks for the point where the likelihood starts to change

This paper makes several distinct contributions. First, we prove that 
NTC is a posterior-preserving transformation (Section \ref{section_algorithm_ntc}), whereas NTC is only known in the literature to preserve the priors \cite{alphamax}. This enables practical estimation of the posteriors using probability density functions of NTC predictions. Independently, we propose two alternative estimates of the mixing proportions (Section \ref{section_algorithm_alpha}). These estimates are based on the relation between the priors and the average posteriors. Finally, these new ideas are merged in DEDPUL (Difference-of-Estimated-Densities-based Positive-Unlabeled Learning), a method that simultaneously solves both Mixture Proportions Estimation and PU Classification, outperforms state-of-the-art methods for both problems \cite{KM,davis2018tree,nnRE}, and is applicable to a wide range of mixture proportions and data sets (Section \ref{section_results}).

DEDPUL adheres to the following two-step strategy. At the first step, NTC is constructed \cite{EN}. Since NTC treats unlabeled sample as negative, its predictions can be considered as biased posterior probabilities of being positive rather than negative. The second step should eliminate this bias. At the second step, the probability density functions of the NTC predictions for both positive and unalbeled samples are explicitly estimated. Then, both the prior and the posterior probabilities of being positive are estimated simultaneously by plugging these densities into the Bayes rule and performing Expectation-Maximization (EM). In case if EM converges trivially, an alternative algorithm is applied.

\section{Problem Setup and Background}\label{section_background}

In this section, we introduce the notations and formally define the problems of Mixture Proportions Estimation and PU Classification. Additionally, we discuss some of the common assumptions that we use in this work in Section \ref{section_background_proportions}.

\subsection{Preliminaries}\label{section_background_prelim}

Let random variables $x_p$, $x_n$, $x_u$ have probability density functions $f_{x_p}(x)$, $f_{x_n}(x)$, $f_{x_u}(x)$ and correspond to positive, negative, and unlabeled distributions of $x$, where $x \in \mathbb{R}^{m}$ is a vector of features. Let $\alpha$ be the proportion of $f_{x_p}(x)$ in $f_{x_u}(x)$:

\begin{equation}\label{eq_mixture}
    f_{x_u}(x) \equiv \alpha f_{x_p}(x) + (1 - \alpha) f_{x_n}(x)
\end{equation}


Let $p_p(x)$ be the posterior probability that $x \sim f_{x_u}(x)$ is sampled from $f_{x_p}(x)$ rather than $f_{x_n}(x)$. Given the proportion $\alpha$, it can be computed using the Bayes rule:

\begin{equation}\label{eq_poster}
    p_p(x) \equiv \frac{\alpha f_{x_p}(x)}{\alpha f_{x_p}(x) + (1 - \alpha) f_{x_n}(x)} = \alpha \frac{f_{x_p}(x)}{f_{x_u}(x)}
\end{equation}

\begin{algorithm*}[t]
\caption{DEDPUL}\label{DEDPUL}
\label{alg_dedpul}
\begin{algorithmic}[1]
\State{\textbf{Input:} Unlabeled sample $X_u$, Positive sample $X_p$}
\State{\textbf{Output:} Priors $\widetilde{\alpha}^*$, Posteriors \{$\widetilde{p}_p^*(x): x \in X_u$\}}

\State{$\widetilde{Y}_u, \widetilde{Y}_p = g(X_u), g(X_p) \backslash \backslash$ \emph{NTC predictions}}


\State{$\widetilde{f}_{y_u}(y), \widetilde{f}_{y_p}(y) = kde(\widetilde{Y}_u), kde(\widetilde{Y}_p) \backslash \backslash$ \emph{kernel density estimation}}


\State{$\widetilde{r}(y) = \{\frac{\widetilde{f}_{y_p}(y)}{\widetilde{f}_{y_u}(y)}: y \in \widetilde{Y}_u\}$ $\backslash \backslash$ \emph{array of density ratios, assumed to be sorted}} 
\State{$\widetilde{r}(y) = \textsc{monotonize}(\widetilde{r}(y), \widetilde{Y}_u) \backslash \backslash$ \emph{enforces partial monotonicity of the ratio curve, defined in Algorithm \ref{alg_secondary}}}
\State{$\widetilde{r}(y) = \textsc{rolling\_median}(\widetilde{r}(y)) \backslash \backslash$ \emph{smooths the ratio curve, defined in Algorithm \ref{alg_secondary}}}


\State{$\widetilde{\alpha}_c^*, \widetilde{p}_{p_c}^*(y) = \textsc{EM}(\widetilde{r}(y)); \quad \widetilde{\alpha}_n^*, \widetilde{p}_{p_n}^*(y)  = \textsc{max\_slope}(\widetilde{r}(y))\backslash \backslash$ \emph{estimates priors and posteriors, defined in Algorithm \ref{alg_secondary}}}
\State{\textbf{Return} $\widetilde{\alpha}_c^*, \widetilde{p}_{p_c}^*(y) \quad$ \textbf{if} $(\widetilde{\alpha}_c^* > 0) \quad$ \textbf{else} $\widetilde{\alpha}_n^*, \widetilde{p}_{p_n}^*(y)$}

\end{algorithmic}
\end{algorithm*}

\subsection{True Proportions are Unidentifiable}\label{section_background_proportions}

The goal of Mixture Proportions Estimation is to estimate the proportions of the mixing components in unlabeled data, given the samples $X_p$ and $X_u$ from the distributions $f_{x_p}(x)$ and $f_{x_u}(x)$ respectively. The problem is fundamentally ill-posed even if the distributions $f_{x_p}(x)$ and $f_{x_u}(x)$ are known. Indeed, based on these densities, a valid estimate of $\alpha$ is any $\widetilde{\alpha}$ (tilde denotes estimate) that fits the following constraint from (\ref{eq_mixture}):

\begin{equation}\label{eq_ineq_alpha_star}
    \forall x: f_{x_u}(x) \geq \widetilde{\alpha} f_{x_p}(x)
\end{equation}

In other words, the true proportion $\alpha$ is generally unidentifiable \cite{blanchard2010}, as it might be any value in the range $\alpha \in [0, \alpha^*]$. However, the upper bound $\alpha^*$ of the range can be identified directly from (\ref{eq_ineq_alpha_star}): 

\begin{equation}\label{eq_alpha_star}
    \alpha^* \equiv \underset{x \sim f_{x_u}(x)}{\inf} \frac{f_{x_u}(x)}{f_{x_p}(x)}
\end{equation}

\noindent Denote $p_p^*(x)$ as the corresponding to $\alpha^*$ posteriors:

\begin{equation}\label{eq_poster_star}
    p_p^*(x) \equiv \alpha^* \frac{f_{x_p}(x)}{f_{x_u}(x)}
\end{equation}

To cope with unidentifiability, a common practice is to make assumptions regarding the proportions, such as mutual irreducibility \cite{scott2013} or anchor set \cite{scott2015,KM}. Instead, we consider estimation of the identifiable upper-bound $\alpha^*$ as the objective of Mixture Proportions Estimation, explicitly acknowledging that the true proportion $\alpha$ might be lower. While the two approaches are similar, the distinction becomes meaningful during validation, in particular on synthetic data. In the first approach, the choice of data distributions is limited to the cases where the assumption holds. Otherwise, it might be unclear how to quantify the degree to which the assumption fails and thus its exact effect on misestimation of the proportions. In contrast, the upper-bound $\alpha^*$ can always be computed using equation (\ref{eq_alpha_star}) and taken as a target estimate. We employ this technique in our experiments.

Note that the probability that a positive instance $x$ is labeled is assumed to be constant. This has been formulated in the literature as Selected Completely At Random (SCAR) \cite{EN}. An emerging alternative is to allow the labeling probability $e(x)$ to depend on the instance, referred to as Selected At Random (SAR) \cite{davis2018beyond}. However, it is easy to see that the priors $\alpha^*$ and the labeling probability $e(x)$ are not uniquely identifiable under SAR without additional assumptions. Specifically, low $f_{x_u}(x)$ compared to $f_{x_p}(x)$ can be explained by either high $e(x)$ or low $\alpha^*$. Moreover, setting $\frac{e(x)}{1 - e(x)} = \frac{f_{x_p}(x)}{f_{x_u}(x)}$ leads to $\alpha^* = 1$ for any densities. Finding particular assumptions that relax SCAR while maintaining identifiability is an important problem of independent interest \cite{hammoudeh2020learning,shajarisales2020learning}. In this paper, however, we do not aim towards solving this problem. Instead, we propose a novel generic algorithm that assumes SCAR, but can be augmented with less restricting assumptions in the future work.

\subsection{Non-Traditional Classifier}\label{section_background_NTC}

Define Non-Traditional Classifier (NTC) as a function $g(x)$:

\begin{equation}\label{eq_ntc}
    g(x) \equiv \frac{f_{x_p}(x)}{f_{x_p}(x) + f_{x_u}(x)}
\end{equation}

\noindent In practice, NTC is a probabilistic classifier trained on the samples $X_p$ and $X_u$, balanced by reweighting the loss function. By definition (\ref{eq_ntc}), the proportions (\ref{eq_alpha_star}) and the posteriors (\ref{eq_poster_star}) can be estimated using $g(x)$:

\begin{equation}\label{eq_ntc_alpha_star}
    \alpha^* = \underset{x \sim f_{x_u}(x)}{\inf}\frac{1 - g(x)}{g(x)}
\end{equation}

\begin{equation}\label{eq_ntc_poster_star}
    p_p^*(x) = \alpha^* \frac{g(x)}{1 - g(x)}
\end{equation}

\noindent Directly applying (\ref{eq_ntc_alpha_star}) and (\ref{eq_ntc_poster_star}) to the output of NTC has been considered by \cite{EN} and is referred to as EN method. Its performance is reported in Section \ref{section_results_ablation}.

Let random variables $y_p$, $y_n$, $y_u$ have probability density functions $f_{y_p}(y)$, $f_{y_n}(y)$, $f_{y_u}(y)$ and correspond to the distributions of $y = g(x)$, where $x \sim f_{x_p}(x)$, $x \sim f_{x_n}(x)$, and $x \sim f_{x_u}(x)$ respectively.

\section{Algorithm Development}\label{section_algorithm}

In this section, we propose DEDPUL. The method is summarized in Algorithm \ref{alg_dedpul} and is illustrated in Figure \ref{fig_dedpul}, while the secondary functions are presented in Algorithm \ref{alg_secondary}. 

In the previous section, we have discussed the case of explicitly known distributions $f_{x_p}(x)$ and $f_{x_u}(x)$. However, only the samples $X_p$ and $X_u$ are usually available. Can we use these samples to estimate the densities $f_{x_p}(x)$ and $f_{x_u}(x)$ and still apply (\ref{eq_alpha_star}) and (\ref{eq_poster_star})? Formally, the answer is positive. For proportion estimation, this idea is explored in \cite{alphamax} as a baseline. In practice, two crucial issues may arise: the curse of dimensionality and the instability of $\alpha^*$ estimation. We attempt to fix these issues in DEDPUL.

\subsection{NTC as Dimensionality Reduction}\label{section_algorithm_ntc}

The first issue is that density estimation is difficult in high dimensions \cite{liu2007curse} and thus the dimensionality should be reduced. To this end, we propose the NTC transformation (\ref{eq_ntc}) that reduces the arbitrary number of dimensions to one. Below we prove that it preserves the priors $\alpha^*$:

\begin{equation}\label{eq_ntc_preserves_alpha}
    \alpha^* \equiv \underset{x \sim f_{x_u}(x)}{\inf} \frac{f_{x_u}(x)}{f_{x_p}(x)} = \underset{y \sim f_{y_u}(y)}{\inf} \frac{f_{y_u}(y)}{f_{y_p}(y)}
\end{equation}

\noindent and the posteriors:

\begin{equation}\label{eq_ntc_preserves_poster}
    p_p^*(x) \equiv \alpha^* \frac{f_{x_p}(x)}{f_{x_u}(x)} = \alpha^* \frac{f_{y_p}(y)}{f_{y_u}(y)} \equiv p_p^*(y)
\end{equation}

\begin{lemma}\label{lemma_NTC}
NTC is a posterior- and $\alpha^*$-preserving transformation, i.e. (\ref{eq_ntc_preserves_alpha}) and (\ref{eq_ntc_preserves_poster}) hold.\footnote{While (\ref{eq_ntc_preserves_alpha}) is known in the literature \cite{alphamax}, (\ref{eq_ntc_preserves_poster}) and the property that $\frac{f_{x_p}(x)}{f_{x_u}(x)} = \frac{f_{y_p}(y)}{f_{y_u}(y)}$ have not been proven before.}
\end{lemma}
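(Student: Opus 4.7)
The plan is to reduce both claims to a single pointwise identity: $\frac{f_{x_p}(x)}{f_{x_u}(x)} = \frac{f_{y_p}(g(x))}{f_{y_u}(g(x))}$ for $f_{x_u}$-a.e.\ $x$. The starting point is the algebraic identity that falls out of the definition (\ref{eq_ntc}) of NTC: since $g(x) = f_{x_p}(x) / (f_{x_p}(x) + f_{x_u}(x))$, we have $g(x)/(1-g(x)) = f_{x_p}(x)/f_{x_u}(x)$. Thus $g$ is a strictly monotone reparametrization of the $x$-space density ratio, and it suffices to show that the $y$-space density ratio $f_{y_p}(y)/f_{y_u}(y)$ equals $y/(1-y)$ almost everywhere.

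To prove this, I would push $f_{x_p}$ and $f_{x_u}$ forward through $g$ and compare. For any Borel set $B \subseteq [0,1]$, the definition of the pushforward density gives
\begin{equation*}
\int_B f_{y_p}(t)\,dt = \int_{g^{-1}(B)} f_{x_p}(x)\,dx = \int_{g^{-1}(B)} \frac{g(x)}{1-g(x)} f_{x_u}(x)\,dx = \int_B \frac{t}{1-t} f_{y_u}(t)\,dt,
\end{equation*}
where the second equality uses the algebraic identity above and the fourth applies the pushforward definition in reverse for $f_{y_u}$. Since $B$ is arbitrary, $f_{y_p}(t) = \frac{t}{1-t} f_{y_u}(t)$ for $f_{y_u}$-a.e.\ $t$, which is exactly the desired ratio equality in $y$-space. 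Evaluating at $t = g(x)$ and invoking the algebraic identity once more recovers the pointwise equality of $x$- and $y$-space density ratios.

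Both parts of the lemma then fall out immediately. Equation (\ref{eq_ntc_preserves_poster}) is just the pointwise ratio identity multiplied by $\alpha^*$. For (\ref{eq_ntc_preserves_alpha}), I would observe that the essential infimum of a function depends only on the distribution of its argument: applying $(1-t)/t$ to $g(x)$ with $x \sim f_{x_u}$ yields a random variable with the same distribution as $(1-y)/y$ with $y \sim f_{y_u}$, and combining this with the ratio equality produces the claimed identity of infima. The main technical obstacle I anticipate is avoiding regularity assumptions on $g$; since NTC may be neither injective nor smooth, classical change-of-variables or co-area formulas need not apply directly. The fix, as used implicitly above, is to phrase everything through pushforward measures, which are defined purely measure-theoretically and require no differential structure on $g$; a minor subtlety is that the ratio identity holds only $f_{y_u}$-almost everywhere, but this suffices because both the essential infimum and the posterior formula are invariant under modification on null sets.
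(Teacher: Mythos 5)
Your proof is correct, and it reaches the paper's central identity $\frac{f_{x_p}(x)}{f_{x_u}(x)} = \frac{f_{y_p}(y)}{f_{y_u}(y)}$ by a genuinely different technical route. Both arguments pivot on the same observation — that by (\ref{eq_ntc}) the ratio $f_{x_p}(x)/f_{x_u}(x) = g(x)/(1-g(x))$ is a function of $y = g(x)$ alone, hence constant on each level set of $g$ — but they cash it out differently. The paper writes the pushed-forward density explicitly via the change-of-variables rule $f_{y_p}(y) = \sum_{x_r \in r} f_{x_p}(x_r)/\nabla g(x_r)$ over the level set $r = g^{-1}(y)$, and cancels the common factor $y/(1-y)$ from numerator and denominator of the ratio of sums; this is short and concrete but tacitly assumes $g$ is differentiable with a nonvanishing gradient and discrete level sets. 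You instead integrate against arbitrary Borel sets $B$ and use only the measure-theoretic definition of the pushforward density, obtaining $f_{y_p}(t) = \frac{t}{1-t} f_{y_u}(t)$ for a.e.\ $t$; this covers non-smooth, non-injective $g$ at the cost of the conclusion holding only almost everywhere, which, as you correctly note, is harmless for both the posterior formula (\ref{eq_ntc_preserves_poster}) and the (essential) infimum in (\ref{eq_ntc_preserves_alpha}). You also make explicit a step the paper leaves implicit: that the infimum over $x \sim f_{x_u}$ transfers to the infimum over $y \sim f_{y_u}$ because the essential infimum depends only on the pushforward law. The only blemish is a harmless miscount ("the fourth equality" where your displayed chain has three); the mathematics is sound and, if anything, more rigorous than the paper's version.
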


\begin{proof}
To prove (\ref{eq_ntc_preserves_alpha}) and (\ref{eq_ntc_preserves_poster}), we only need to show that $\frac{f_{x_p}(x)}{f_{x_u}(x)} = \frac{f_{y_p}(y)}{f_{y_u}(y)}$, where $y = g(x)$. We will rely on the rule of variable substitution $f_{y_p}(y) = \sum_{x_r \in r} \frac{f_{x_p}(x_r)}{\nabla g(x_r)}$, where $r = \{x \mid g(x) = y\}$ or, equivalently, $r = \{x \mid \frac{f_{x_p}(x)}{f_{x_u}(x)} = \frac{y}{1-y}\}$:

\begin{equation*}
\begin{split}
    \frac{f_{y_p}(y)}{f_{y_u}(y)} = \frac{\sum_{x_r \in r} \frac{f_{x_p}(x_r)}{\nabla g(x_r)}}{\sum_{x_r \in r} \frac{f_{x_u}(x_r)}{\nabla g(x_r)}} = \frac{\sum_{x_r \in r} \frac{f_{x_u}(x_r)}{\nabla g(x_r)} \frac{f_{x_p}(x_r)}{f_{x_u}(x_r)}}{\sum_{x_r \in r} \frac{f_{x_u}(x_r)}{\nabla g(x_r)}} \\
    = \frac{y}{1-y} \frac{\sum_{x_r \in r} \frac{f_{x_u}(x_r)}{\nabla g(x_r)}}{\sum_{x_r \in r} \frac{f_{x_u}(x_r)}{\nabla g(x_r)}} =  \frac{y}{1-y} = \frac{f_{x_p}(x_r)}{f_{x_u}(x_r)}, x_r \in r
\end{split}
\end{equation*}
\end{proof}

\textit{Remark on Lemma \ref{lemma_NTC}}. In the proof, we only use the property of NTC that the density ratio $\frac{f_{x_p}(x)}{f_{x_u}(x)}$ is constant for all $x$ such that $g(x) = y$. Generally, any function $g(x)$ with this property preserves the priors and the posteriors. For instance, NTC could be biased towards 0.5 probability estimate due to regularization. In this case (\ref{eq_ntc_preserves_alpha}) and (\ref{eq_ntc_preserves_poster}) would still hold, whereas (\ref{eq_ntc_alpha_star}) and (\ref{eq_ntc_poster_star}) used in EN would yield incorrect estimates since they rely on NTC being unbiased.

Note that the choice of NTC is flexible. For example, NTC and its hyperparameters can be chosen by maximizing ROC-AUC, a metric that is invariant to label-dependent contamination \cite{menon2015learning,menon2016learning}.

After the densities of NTC predictions are estimated, two smoothing heuristics (Algorithm \ref{alg_secondary}) are applied to their ratio.

\begin{algorithm}[t]
\caption{Secondary functions for DEDPUL}
\label{alg_secondary}
\begin{algorithmic}[1]

\Function{EM}{$\widetilde{r}(y), tol=10^{-5}$}
\State{$\widetilde{\alpha}^* = 1$ $\backslash \backslash$ \emph{Initialize}}
\Repeat
    \State{$\widetilde{\alpha}_{prev}^* = \widetilde{\alpha}^*$}
    \State{$\widetilde{p}_p(y) = \min(\widetilde{\alpha}^* \cdot  \widetilde{r}(y), 1)$ $\backslash \backslash$ \emph{E-step}}
    \State{$\widetilde{\alpha}^* = \frac{1}{\left| Y_u \right|}\sum_{y \in Y_u}\left(\widetilde{p}_p(y)\right)$ $\backslash \backslash$ \emph{M-step}}
\Until \emph{$|\widetilde{\alpha}_{prev}^* - \widetilde{\alpha}^*| < tol$} $\backslash \backslash$ \emph{Convergence}
\State{\textbf{Return} $\widetilde{\alpha}^*$, $\widetilde{p}_p(y)$}
\EndFunction


\Function{max\_slope}{$\widetilde{r}(y), max\_D = 0.05, \epsilon = 10^{-3}$}
\State{$D = list()$ $\backslash \backslash$ \emph{Priors subtract average posteriors}}
\For{$\widetilde{\alpha}$ \textbf{in} range(start=0, end=1, step=$\epsilon$)}
    \State{$\widetilde{p}_p(y) = \min(\widetilde{\alpha} \cdot  \widetilde{r}(y), 1)$}
    \State{$idx = \widetilde{\alpha}/\epsilon$}
    \State{$D[idx] = \widetilde{\alpha} - \frac{1}{\left| Y_u \right|}\sum_{y \in Y_u}\left(\widetilde{p}_p(y)\right)$}
\EndFor
\State{$l = list()$ $\backslash \backslash$ \emph{Second lags of $D$}}
\For{i \textbf{in} range(start=1, end=$\frac{1}{\epsilon}-1$, step=1)}
\State{$l[i] = (D[i-1] - D[i]) - (D[i] - D[i+1])$}
\EndFor
\State{$\widetilde{\alpha}^* = argmax\left(l\right)$ s. t. $D < max\_D$}
\State{$\widetilde{p}_p(y) = \min(\widetilde{\alpha}^* \cdot \widetilde{r}(y), 1)$}
\State{\textbf{Return} $\widetilde{\alpha}^*$, $\widetilde{p}_p(y)$}
\EndFunction


\Function{monotonize}{$\widetilde{r}(y), \widetilde{Y}_u$}
\State{$max_{cur} = 0, threshold = \frac{1}{\left|\widetilde{Y}_u\right|}\sum_{\widetilde{y} \in \widetilde{Y}_u}\widetilde{y}$}
\For{i \textbf{in} range(start=0, end=$length(\widetilde{r}(y))$, step=1)}
\If{$\widetilde{Y}_u[i] \geq threshold$}
\State{$max_{cur} = \max(\widetilde{r}(y)[i], max_{cur})$}
\State{$\widetilde{r}(y)[i] = max_{cur}$}
\EndIf
\EndFor
\State{\textbf{Return} $\widetilde{r}(y)$}
\EndFunction


\Function{rolling\_median}{$\widetilde{r}(y)$}
\State{$L = length(\widetilde{r}(y)), k = L/20$}
\State{$\widetilde{r}_{new}(y) = \widetilde{r}(y)$}
\For{i \textbf{in} range(start=0, end=$L$, step=1)}
\State{$k_{cur} = \min(k, i, L - i)$ $\backslash \backslash$ \emph{Number of neighbours}}
\State{$\widetilde{r}_{new}(y)[i] = median(\widetilde{r}(y)[i-k_{cur}: i+k_{cur}])$}
\EndFor
\State{\textbf{Return} $\widetilde{r}_{new}(y)$}
\EndFunction

\end{algorithmic}
\end{algorithm}

\subsection{Alternative $\alpha^*$ Estimation}\label{section_algorithm_alpha}

The second issue is that (\ref{eq_ntc_preserves_alpha}) may systematically underestimate $\alpha^*$ as it solely relies on the noisy infimum point. This concern is confirmed experimentally in Section \ref{section_results_ablation}. To resolve it, we propose two alternative estimates. We first prove that these estimates coincide with $\alpha^*$ when the densities are known, and then formulate their practical approximations.

\begin{figure*}[ht]
    \centering
    \includegraphics[width=0.3\linewidth]{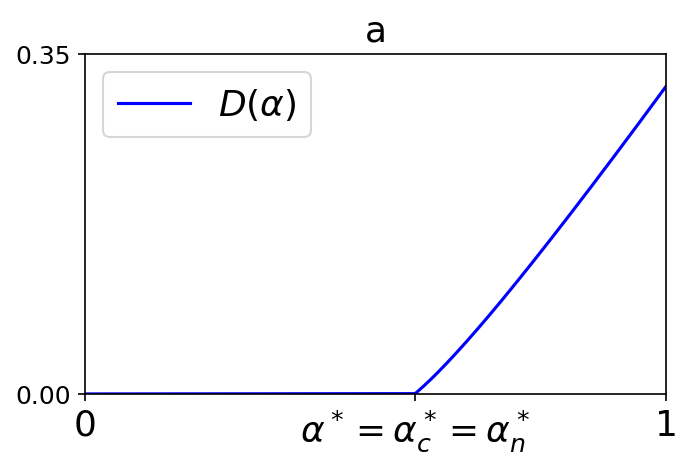}%
    \includegraphics[width=0.3\linewidth]{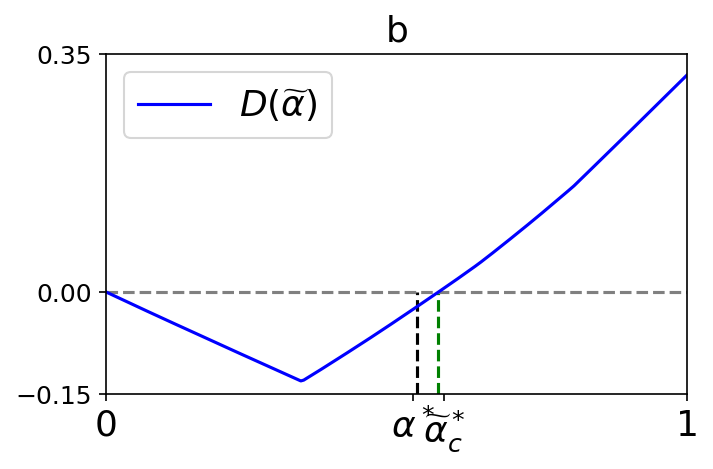}%
    \includegraphics[width=0.3\linewidth]{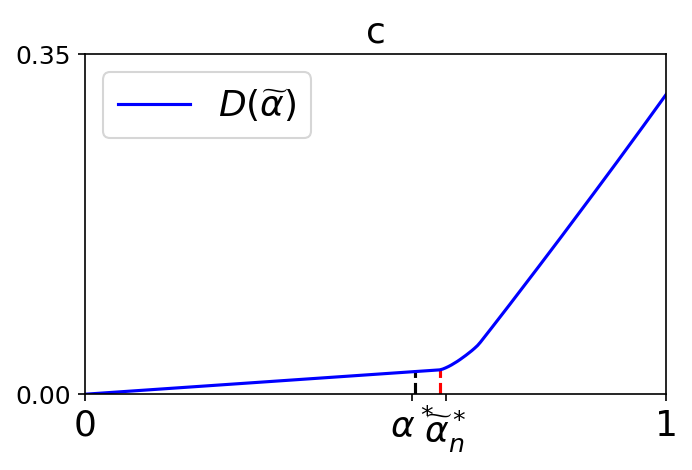}
    \caption{Behaviours of $D(\alpha)$ in theory (a) and practice (b, c). In these plots, $f_{x_p}(x) = L(0, 1)$, $f_{x_n} = L(2, 1)$, $\alpha = 0.5$, and $\alpha^* = 0.568$, where $L(\cdot, \cdot)$ denotes Laplace distribution.}
    \label{fig_D}
\end{figure*}

\begin{lemma}
    Alternative estimates of the priors are equal to the definition (\ref{eq_ntc_preserves_alpha}): $\alpha_c^* = \alpha_n^* = \alpha^*$, where \\ $D(\alpha) \equiv \alpha - \mathop{\mathbb{E}}_{y \sim f_{y_u}(y)}\left[p_p(y)\right]$, \\ $\alpha_c^* \equiv \underset{\alpha \in [0, 1]}{\max} \left\{ \alpha \mid D(\alpha) = 0 \right\}$, \\ $\alpha_n^* \equiv \underset{\alpha \in [0, 1]}{\min} \left\{ \alpha \mid D(\alpha + \epsilon) - 2 D(\alpha) + D(\alpha - \epsilon) > 0 \right\}$.
\end{lemma}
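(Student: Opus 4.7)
The plan is to derive a closed-form expression for $D(\alpha)$ by splitting $\mathbb{E}_{y \sim f_{y_u}}[p_p(y)]$ according to whether the clip at $1$ is active. Writing $r(y) = f_{y_p}(y)/f_{y_u}(y)$ and $p_p(y) = \min(\alpha r(y), 1)$, let $B_\alpha = \{y : \alpha r(y) > 1\}$. Then a short calculation gives
\begin{equation*}
D(\alpha) = \int_{B_\alpha} \bigl(\alpha f_{y_p}(y) - f_{y_u}(y)\bigr)\, dy,
\end{equation*}
using that $\int \alpha r(y)\, f_{y_u}(y)\, dy = \alpha$ on the unclipped part and $\int_{B_\alpha} \alpha f_{y_p} = \alpha - \int_{A_\alpha} \alpha f_{y_p}$ on the clipped part.

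Next I would use the definition of $\alpha^* = \operatorname*{ess\,inf}_{y \sim f_{y_u}} f_{y_u}(y)/f_{y_p}(y)$ from equation (\ref{eq_alpha_star}), which (after Lemma \ref{lemma_NTC}) transfers to the $y$-space. For $\alpha \leq \alpha^*$, one has $\alpha r(y) \leq 1$ almost everywhere under $f_{y_u}$, so $B_\alpha$ has measure zero and $D(\alpha)=0$. For $\alpha > \alpha^*$, the set $B_\alpha$ has strictly positive $f_{y_u}$-measure by the infimum property, and its integrand is strictly positive there, giving $D(\alpha) > 0$. Moreover, $B_\alpha$ is monotone increasing in $\alpha$, which yields monotonicity of $D$ on $(\alpha^*, 1]$.

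From this trichotomy the first equality is immediate: $\{\alpha \in [0,1] : D(\alpha)=0\} = [0, \alpha^*]$, so $\alpha_c^* = \max\{\alpha : D(\alpha)=0\} = \alpha^*$. For $\alpha_n^*$, the discrete second difference $D(\alpha+\epsilon) - 2D(\alpha) + D(\alpha-\epsilon)$ is identically zero whenever $[\alpha-\epsilon, \alpha+\epsilon] \subseteq [0, \alpha^*]$, and at $\alpha = \alpha^*$ it equals $D(\alpha^* + \epsilon) > 0$. Hence the minimum $\alpha$ at which it becomes positive is $\alpha^*$ (in the continuous limit $\epsilon \to 0$), giving $\alpha_n^* = \alpha^*$.

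The main obstacle is handling the infimum carefully: in the continuous case, one needs $f_{y_u}$-positive measure of $\{y : \alpha r(y) > 1\}$ for every $\alpha > \alpha^*$, which follows from $\alpha^*$ being an essential infimum but requires an explicit argument to rule out pathologies (isolated infimum attained only on a null set). A secondary subtlety is that $\alpha_n^*$ is defined discretely with step $\epsilon$, so strictly speaking $\alpha_n^* \in [\alpha^* - \epsilon, \alpha^*]$; I would state the result as $\alpha_n^* \to \alpha^*$ as $\epsilon \to 0$, matching how the estimator is actually used in Algorithm \ref{alg_secondary}.
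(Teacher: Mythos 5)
Your proposal is correct and follows essentially the same route as the paper's own proof: for $\alpha \leq \alpha^*$ no clipping occurs and the expected posterior equals the prior, while for $\alpha > \alpha^*$ clipping strictly reduces the expectation, giving the trichotomy that pins down both $\alpha_c^*$ and $\alpha_n^*$. Your explicit formula $D(\alpha) = \int_{B_\alpha}(\alpha f_{y_p} - f_{y_u})\,dy$ and the attention to the essential-infimum and discrete-$\epsilon$ subtleties make rigorous what the paper argues informally, but the underlying argument is the same.
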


\begin{proof}
    We first prove that $D(\alpha)$ behaves in a specific way (Fig. \ref{fig_D}a). For the valid priors $\alpha \leq \alpha^*$, it holds that $D(\alpha) = 0$, since priors must be equal to expected posteriors: $p(h) = \mathbb{E}_{x \sim f(x)} p(h \mid x)$ for some hypothesis $h$ and data $x \sim f(x)$. 
    
    For the invalid priors $\alpha > \alpha^*$, the situation changes. By definition of $\alpha^*$ (\ref{eq_ntc_preserves_alpha}), it is now the case that $p_p(y) = \alpha \frac{f_{y_p}(y)}{f_{y_u}(y)} > 1$ for some $y$. To prevent this, the posteriors are clipped from above like in (\ref{eq_E}). Under this redefinition, the equality of priors to expected posteriors no longer holds. Instead, clipping decreases the posteriors in comparison to the priors. Thus, for $\alpha > \alpha^*$ it holds that $\alpha > \mathop{\mathbb{E}}_{y \sim f_{y_u}(y)}\left[p_p(y)\right]$ and $D(\alpha) > 0$. 
    
    So, $D(\alpha)$ equals 0 up until $\alpha^*$ and is positive after. It trivially follows that $\alpha^*$ is the highest $\alpha$ for which $D(\alpha) = 0$, and the lowest $\alpha$ for which $D(\alpha + \epsilon) - 2 \cdot D(\alpha) + D(\alpha - \epsilon) = D(\alpha + \epsilon) > 0$ for any $\epsilon$ (in practice, $\epsilon$ is chosen small).
\end{proof}

Figures \ref{fig_D}b and \ref{fig_D}c reflect how $D(\alpha)$ changes in practice due to imperfect density estimation: in simple words, the curve either `rotates` down or up. In the more frequent first case, non-trivial $\widetilde{\alpha}_c^*$ exists (the curve intersects zero). We identify it using Expectation-Maximization algorithm, but analogs like binary search could be used as well. In the second case (e.g. sometimes when $\alpha^*$ is extremely low), non-trivial $\widetilde{\alpha}_c^*$ does not exist and \textsc{EM} converges to $\widetilde{\alpha} = 0$. In this case, we instead approximate $\widetilde{\alpha}_n^*$ using \textsc{MAX\_SLOPE} function. We define these functions in Algorithm \ref{alg_secondary} and cover them below in more details. In DEDPUL, the two estimates are switched by choosing the maximal. In practice, we recommend to plot $D(\alpha)$ to visually identify $\alpha_c^*$ (if exists) or $\alpha_n^*$.

\paragraph{Approximation of $\alpha_c^*$ with \textsc{EM} algorithm.} On Expectation step, the proportion $\widetilde{\alpha}$ is fixed and the posteriors $\widetilde{p}_{p}(y)$ are estimated using clipped (10):

\begin{equation}\label{eq_E}
    \widetilde{p}_{p}(y)  = \min\left(\widetilde{\alpha} \frac{\widetilde{f}_{y_p}(y)}{\widetilde{f}_{y_u}(y)}, 1\right)
\end{equation}

\noindent On Maximization step, $\widetilde{\alpha}$ is updated as the average value of the posteriors over the unlabeled sample $\widetilde{Y}_u$:

\begin{equation}\label{eq_M}
    \widetilde{\alpha} = \frac{1}{\left| Y_u \right|}\sum_{y \in Y_u}\left(\widetilde{p}_{p}(y)\right)
\end{equation}

\noindent These two steps iterate until convergence. The algorithm should be initialized with $\widetilde{\alpha}=1$.

\paragraph{Approximation of $\alpha_n^*$ with \textsc{MAX\_SLOPE}.} On a grid of proportion values $\widetilde{\alpha} \in [0, 1]$, second difference of $D(\widetilde{\alpha})$ is calculated for each point. Then, the proportion that corresponds to the highest second difference is chosen.

\section{Experimental Procedure}\label{section_procedure}

We conduct experiments on synthetic and benchmark data sets. Although DEDPUL is able to solve both Mixture Proportions Estimation and PU Classification simultaneously, some of the algorithms are not. For this reason, all PU Classification algorithms receive $\alpha^*$ (synthetic data) or $\alpha$ (benchmark data) as an input. The algorithms are tested on numerous data sets that differ in the distributions and the proportions of the mixing components. Additionally, each experiment is repeated 10 times, and the results are averaged. 

The statistical significance of the difference between the algorithms is verified in two steps. First, the Friedman test with 0.02 P-value threshold is applied to check whether all the algorithms perform equivalently. If this null hypothesis is rejected, the algorithms are compared pair-wise using the paired Wilcoxon signed-rank test with 0.02 P-value threshold and Holm correction for multiple hypothesis testing. The results are summarized via critical difference diagrams (Fig. \ref{fig_cdd}), the code for which is taken from \cite{IsmailFawaz2018deep}.


\subsection{Data}

In the synthetic setting we experiment with mixtures of one-dimensional Laplace distributions. We fix $f_{x_p}(x)$ as $L(0, 1)$ and mix it with different $f_{x_n}(x)$: $L(1, 1)$, $L(2, 1)$, $L(4, 1)$, $L(0, 2)$, $L(0, 4)$. For each of these cases, the proportion $\alpha$ is varied in \{0.01, 0.05, 0.25, 0.5, 0.75, 0.95, 0.99\}. The sample sizes $X_p$ and $X_u$ are fixed as 1000 and 10000.

In the benchmark setting (Table \ref{table_data}) we experiment with eight data sets from UCI machine learning repository \cite{UCI}, MNIST image data set of handwritten digits \cite{MNIST}, and CIFAR-10 image data set of vehicles and animals \cite{cifar10}. The proportion $\alpha$ is varied in \{0.05, 0.25, 0.5, 0.75, 0.95\}. The methods' performance is mostly determined by the size of the labeled sample $\left| X_p \right|$. Therefore, for a given data set, $\left| X_p \right|$ is fixed across the different values of $\alpha$, and for a given $\alpha$, the maximal $\left| X_u \right|$ that can satisfy it is chosen. Categorical features are transformed using dummy encoding. Numerical features are normalized. Regression and multi-classification target variables are adapted for binary classification.

\begin{table}[ht]
\centering
\begin{tabular}{|c|c|c|c|c|}
\hline
data set  & \begin{tabular}[c]{@{}c@{}}total\\ size\end{tabular} & $\left|X_p\right|$ & dim  & \begin{tabular}[c]{@{}c@{}}positive\\ target\end{tabular}                                                  \\ \hline
bank      & 45211                                                & 1000               & 16   & yes                                                              \\ \hline
concrete  & 1030                                                 & 100                & 8    & (35.8, 82.6)                                                     \\ \hline
landsat   & 6435                                                 & 1000               & 36   & 4, 5, 7                                                          \\ \hline
mushroom  & 8124                                                 & 1000               & 22   & p                                                                \\ \hline
pageblock & 5473                                                 & 100                & 10   & 2, 3, 4, 5                                                       \\ \hline
shuttle   & 58000                                                & 1000               & 9    & 2, 3, 4, 5, 6, 7                                                 \\ \hline
spambase  & 4601                                                 & 400                & 57   & 1                                                                \\ \hline
wine      & 6497                                                 & 500                & 12   & red                                                              \\ \hline
mnist     & 70000                                                & 1000               & 784  & 1, 3, 5, 7, 9                                                    \\ \hline
cifar10   & 60000                                                & 1000               & 3072 & \begin{tabular}[c]{@{}c@{}}plane, car\\ ship, track\end{tabular} \\ \hline
\end{tabular}
\caption{Description of benchmark data sets}
\label{table_data}
\end{table}

\subsection{Measures for Performance Evaluation}


The synthetic setting provides a straightforward way to evaluate performance. Since the underlying distributions $f_{x_p}(x)$ and $f_{x_u}(x)$ are known, we calculate the true values of the proportions $\alpha^*$ and the posteriors $p_p^*(x)$ using (\ref{eq_alpha_star}) and (\ref{eq_poster_star}) respectively. Then, we directly compare these values with algorithm's estimates using mean absolute errors. In the benchmark setting, the distributions are unknown. Here, for Mixture Proportions Estimation we use a similar measure, but substitute $\alpha^*$ with $\alpha$, while for PU Classification we use $1 - accuracy$ with $0.5$ probability threshold. We additionally compare the algorithms using f1-measure, a popular metric for imbalanced data sets. 



\begin{figure*}[t!]
    \centering
    \begin{subfigure}{0.49\linewidth}
        \centering
        \includegraphics[width=\linewidth]{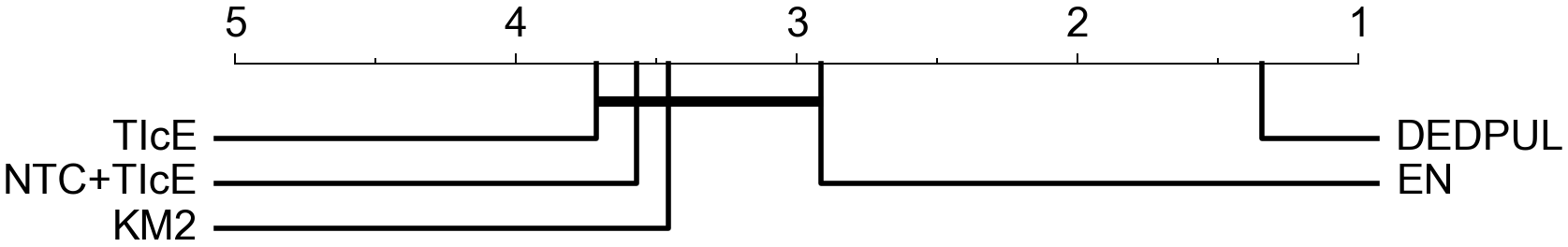}
        \caption{Synthetic data, $\left| \alpha^* - \widetilde{\alpha}^* \right|$}
    \end{subfigure}%
    ~ 
    \begin{subfigure}{0.49\linewidth}
        \centering
        \includegraphics[width=\linewidth]{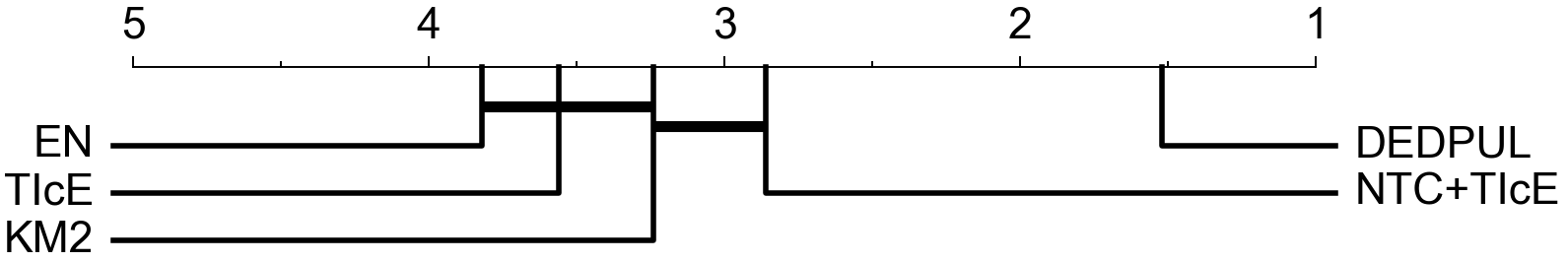}
        \caption{Benchmark data, $\left| \alpha - \widetilde{\alpha}^* \right|$}
    \end{subfigure}
    
    \begin{subfigure}{0.49\linewidth}
        \centering
        \includegraphics[width=\linewidth]{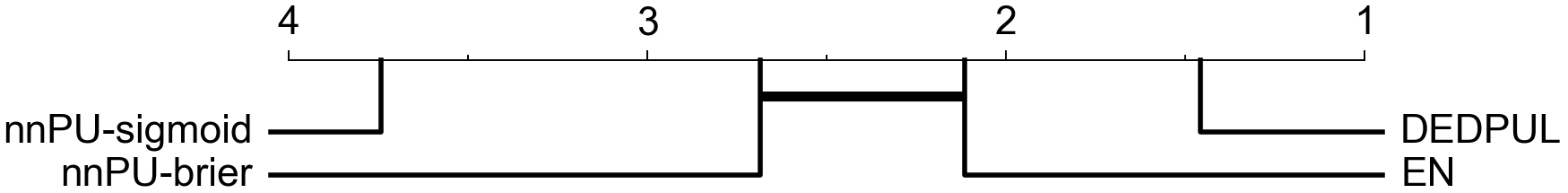}
        \caption{Synthetic data, $\left| p_p^*(x) - \widetilde{p}_p^*(x)\right|$}
    \end{subfigure}%
    ~
    \begin{subfigure}{0.49\linewidth}
        \centering
        \includegraphics[width=\linewidth]{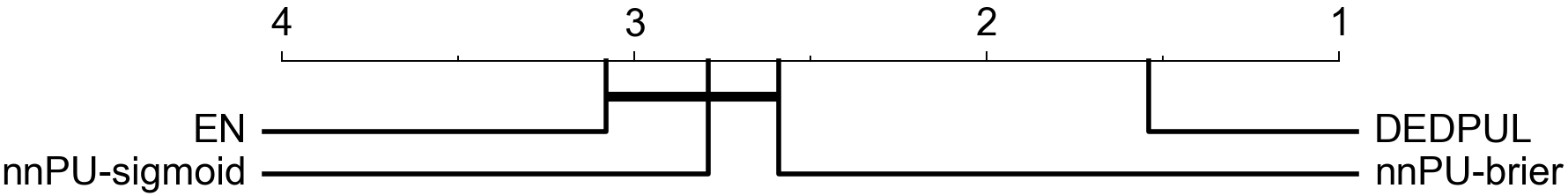}
        \caption{Benchmark data, 1 - accuracy}
    \end{subfigure}
    
    \begin{subfigure}{0.49\linewidth}
        \centering
        \includegraphics[width=\linewidth]{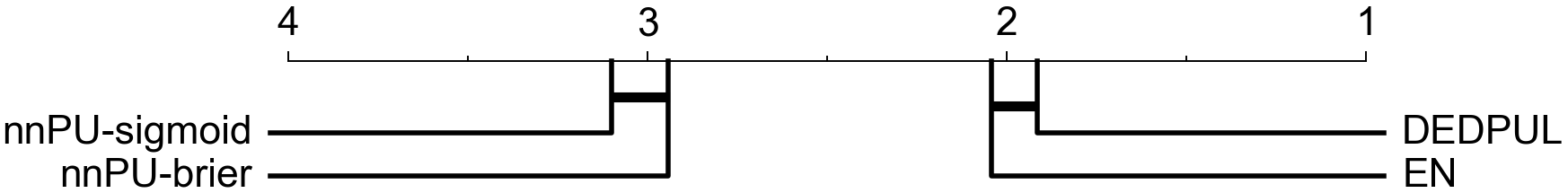}
        \caption{Synthetic data, 1 - f1-measure}
    \end{subfigure}%
    ~
    \begin{subfigure}{0.49\linewidth}
        \centering
        \includegraphics[width=\linewidth]{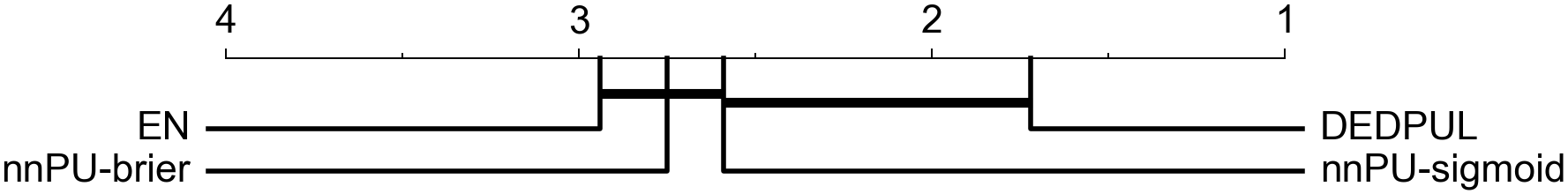}
        \caption{Benchmark data, 1 - f1-measure}
    \end{subfigure}

    \caption{Critical difference diagrams. Summarized comparison of the algorithms on synthetic and benchmark data with different performance measures. The horizontal axis represents relative ranking. Bold horizontal lines represent statistically insignificant differences. In the synthetic setup, accuracy differs insignificantly between the algorithms according to the Friedman test, so the diagram is not reported.}
    \label{fig_cdd}
\end{figure*}

\subsection{Implementations of Methods}

DEDPUL is implemented according to Algorithms \ref{alg_dedpul} and \ref{alg_secondary}. As NTC we apply neural networks with 1 hidden layer with 16 neurons (1-16-1) to the synthetic data, and with 2 hidden layers with 512 neurons (dim-512-512-1) and with batch normalization \cite{bn} after the hidden layers to the benchmark data. To CIFAR-10, a light version of all-convolutional net \cite{allconv} with 6 convolutional and 2 fully-connected layers is applied, with 2-d batch normalization after the convolutional layers. These architectures are used henceforth in EN and nnPU. The networks are trained on the logistic loss with ADAM \cite{adam} optimizer with 0.0001 weight decay. The predictions of each network are obtained with 5-fold cross-validation. 

Densities of the predictions are computed using kernel density estimation with Gaussian kernels. Instead of $\widetilde{y} \in [0, 1]$, we estimate densities of appropriately ranged $log\left(\frac{\widetilde{y}}{1 - \widetilde{y}}\right)$ and make according post-transformations. Bandwidths are set as 0.1 and 0.05 for $\widetilde{f}_{y_p}(y)$ and $\widetilde{f}_{y_u}(y)$ respectively. In practice, a well-performing heuristic is to independently choose bandwidths for $\widetilde{f}_{y_p}(y)$ and $\widetilde{f}_{y_u}(y)$ that maximize average log-likelihood during cross-validation. Threshold in \textsc{monotonize} is chosen as $\frac{1}{\left| X_u \right|}\sum_{x \in X_u}\left(\widetilde{g}(x)\right)$.

The implementations of the Kernel Mean based gradient thresholding algorithm (KM) are retrieved from the original paper \cite{KM}.\footnote{\url{http://web.eecs.umich.edu/~cscott/code.html##kmpe}.} We provide experimental results for KM2, a better performing version. As advised in the paper, MNIST and CIFAR-10 data are reduced to 50 dimensions with Principal Component Analysis \cite{pca}. Since KM2 cannot handle big data sets, it is applied to subsamples of at most 4000 examples ($X_p$ is prioritized).

The implementation of Tree Induction for label frequency $c$ Estimation (TIcE) algorithm is retrieved from the original paper \cite{davis2018tree}.\footnote{\url{https://dtai.cs.kuleuven.be/software/tice}} MNIST and CIFAR-10 data are reduced to 200 dimensions with Principal Component Analysis. In addition to the vanilla version, we also apply TIcE to the data preprocessed with NTC (NTC+TIcE). The performance of both versions heavily depends on the choice of the hyperparameters such as the probability lower bound $\delta$ and the number of splits per node $s$. These parameters were chosen as $\delta = 0.2$ and $s = 40$ for univariate data, and $\delta = 0.2$ and $s = 4$ for multivariate data.

We explore two versions of the non-negative PU learning (nnPU) algorithm \cite{nnRE}: the original version with Sigmoid loss function and our modified version with Brier loss function.\footnote{Sigmoid and Brier loss functions are mean absolute and mean squared errors between classifier's predictions and true labels.} The hyperparameters were chosen as $\beta = 0$ and $\gamma = 1$ for the benchmark data, and as $\beta = 0.1$ and $\gamma = 0.9$ for the synthetic data.

Elkan-Noto (EN) is implemented according to the original paper \cite{EN}. Out of the three proposed estimators of the labeling probability, we report results for $e_3$ (equivalent to (\ref{eq_ntc_alpha_star})) since it has shown better performance.

\section{Experimental results}\label{section_results}

\subsection{Comparison with State-of-the-art}\label{section_results_sota}

Results of comparison between DEPDUL and state-of-the-art are presented in Figures \ref{fig_mpe_synth}-\ref{fig_puc_bench} and are summarized in Figure \ref{fig_cdd}. DEDPUL significantly outperforms state-of-the-art of both problems in both setups for all measures, with the exception of insignificant difference between f1-measures of DEDPUL and nnPU-sigmoid in the benchmark setup.

\begin{figure*}[t]
\centering
  \includegraphics[width=\linewidth]{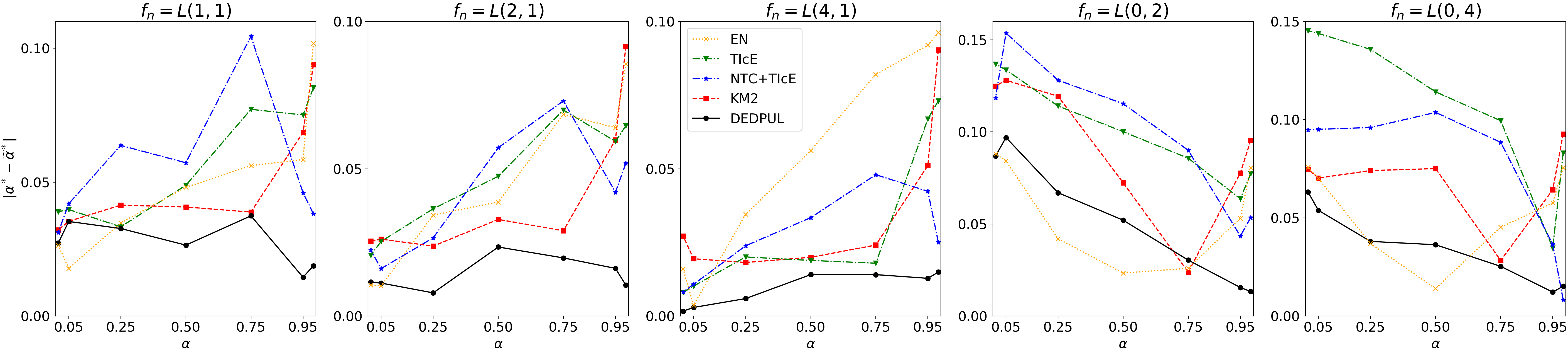}
  \caption{Comparison of Mixture Proportions Estimation Algorithms, Synthetic Data}
  \label{fig_mpe_synth}
  
  \includegraphics[width=\linewidth]{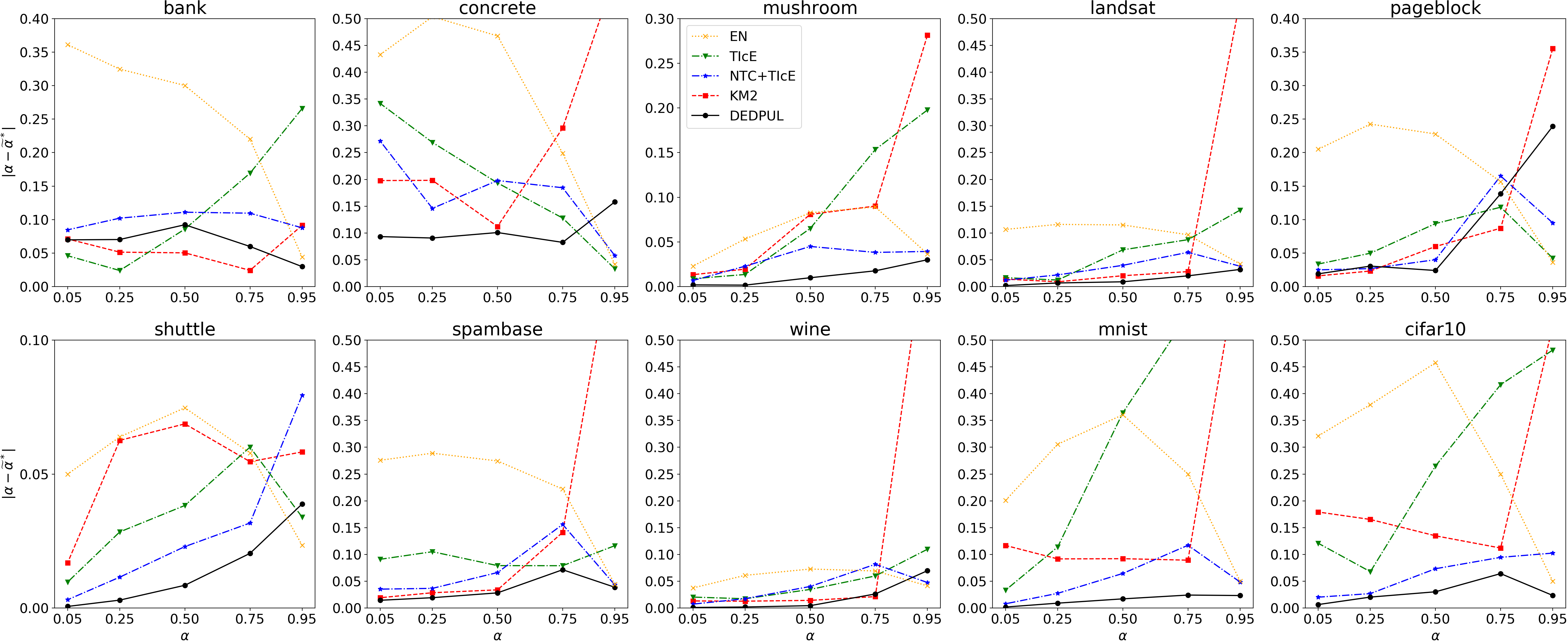}
  \caption{Comparison of Mixture Proportions Estimation Algorithms, Benchmark Data}
  \label{fig_mpe_bench}
  
  \includegraphics[width=\linewidth]{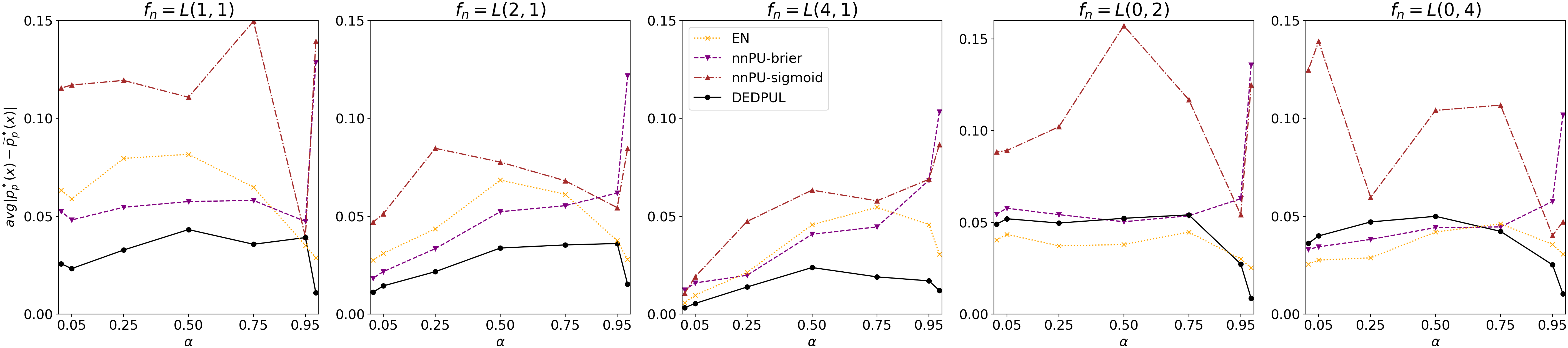}
  \caption{Comparison of Positive-Unlabeled Classification Algorithms, Synthetic Data}
  \label{fig_puc_synth}
  
  \includegraphics[width=\linewidth]{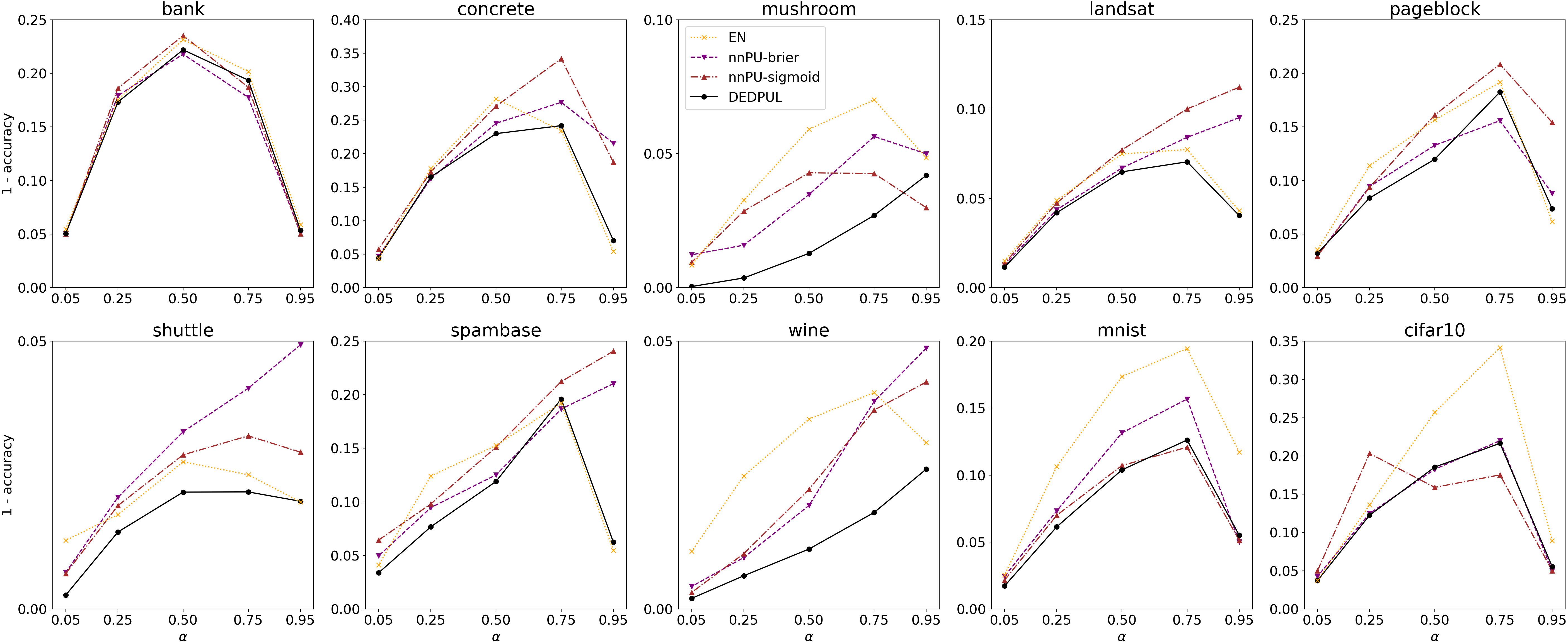}
  \caption{Comparison of Positive-Unlabeled Classification Algorithms, Benchmark Data}
  \label{fig_puc_bench}
\end{figure*}

On some benchmark data sets (Fig. \ref{fig_mpe_bench} -- landsat, pageblock, wine), KM2 performs on par with DEDPUL for all $\alpha$ values except for the highest $\alpha = 0.95$, while on the other data sets (except for bank) KM2 is strictly outmatched. The regions of low proportion of negatives are of special interest in the anomaly detection task, and it is a vital property of an algorithm to find anomalies that are in fact present. Unlike DEDPUL, TIcE, and EN, KM2 fails to meet this requirement. On the synthetic data (Fig. \ref{fig_mpe_synth}) DEDPUL also outperforms KM2, especially when the priors are high.

Regarding comparison with TIcE, DEDPUL outperforms both its vanilla and NTC versions in both synthetic and benchmark settings (Fig. \ref{fig_mpe_synth}, \ref{fig_mpe_bench}). Two insights about TIcE could be noted. First, while the NTC preprocessing is redundant for one-dimensional synthetic data, it can greatly improve the performance of TIcE on the high-dimensional benchmark data, especially on image data. Second, while KM2 generally performs a little better than TIcE on the synthetic data, the situation is reversed on the benchmark data.

In the posteriors estimation task, DEDPUL matches or outperforms both Sigmoid and Brier versions of nnPU algorithm on all data sets in both setups (Fig. \ref{fig_puc_synth}, \ref{fig_puc_bench}). An insight about the Sigmoid version is that it estimates probabilities relatively poorly (Fig. \ref{fig_puc_synth}). This is because the Sigmoid loss is minimized by the median prediction, i.e. either 0 or 1 label, while the Brier loss is minimized by the average prediction, i.e. a probability in $[0, 1]$ range. Conversely, there is no clear winner in the benchmark setting (Fig. \ref{fig_puc_bench}). On some data sets (bank, concrete, landsat, pageblock, spambase) the Brier loss leads to the better training, whereas on other data sets (mushroom, shuttle, wine, mnist) the Sigmoid loss prevails.

Below we show that the performance of DEDPUL can be further improved by the appropriate choice of NTC.

\subsection{Ablation study}\label{section_results_ablation}

DEDPUL is a combination of multiple parts such as NTC, density estimation, EM algorithm, and several heuristics. To disentangle the contribution of these parts to the overall performance of the method, we perform an ablation study. We modify different details of DEDPUL and report changes in performance in Figures \ref{fig_mpe_synth_abl}-\ref{fig_puc_bench_abl}.

\begin{figure*}[t]
\centering
  \includegraphics[width=\linewidth]{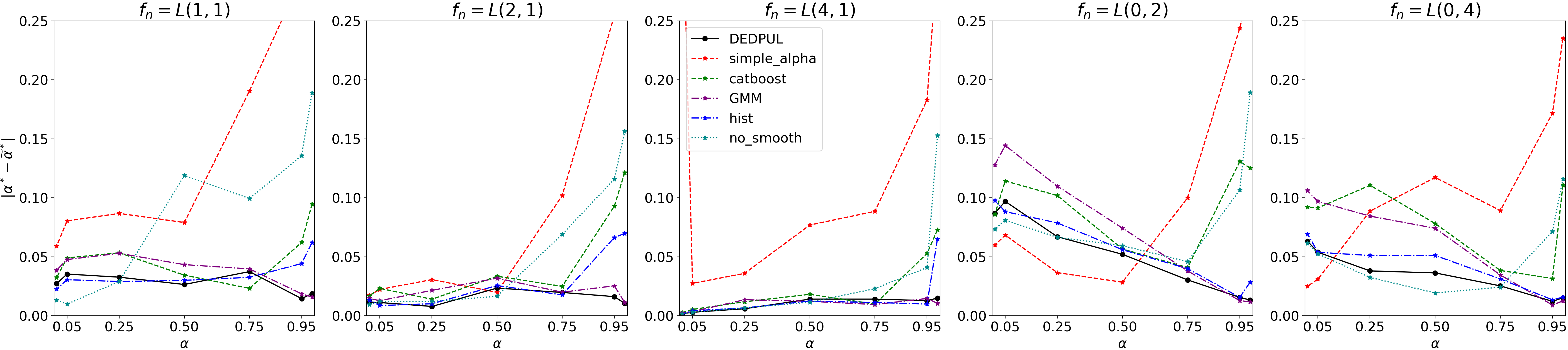}
  \caption{Ablations of DEDPUL, Mixture Proportions Estimation, Synthetic Data}
  \label{fig_mpe_synth_abl}
  
  \includegraphics[width=\linewidth]{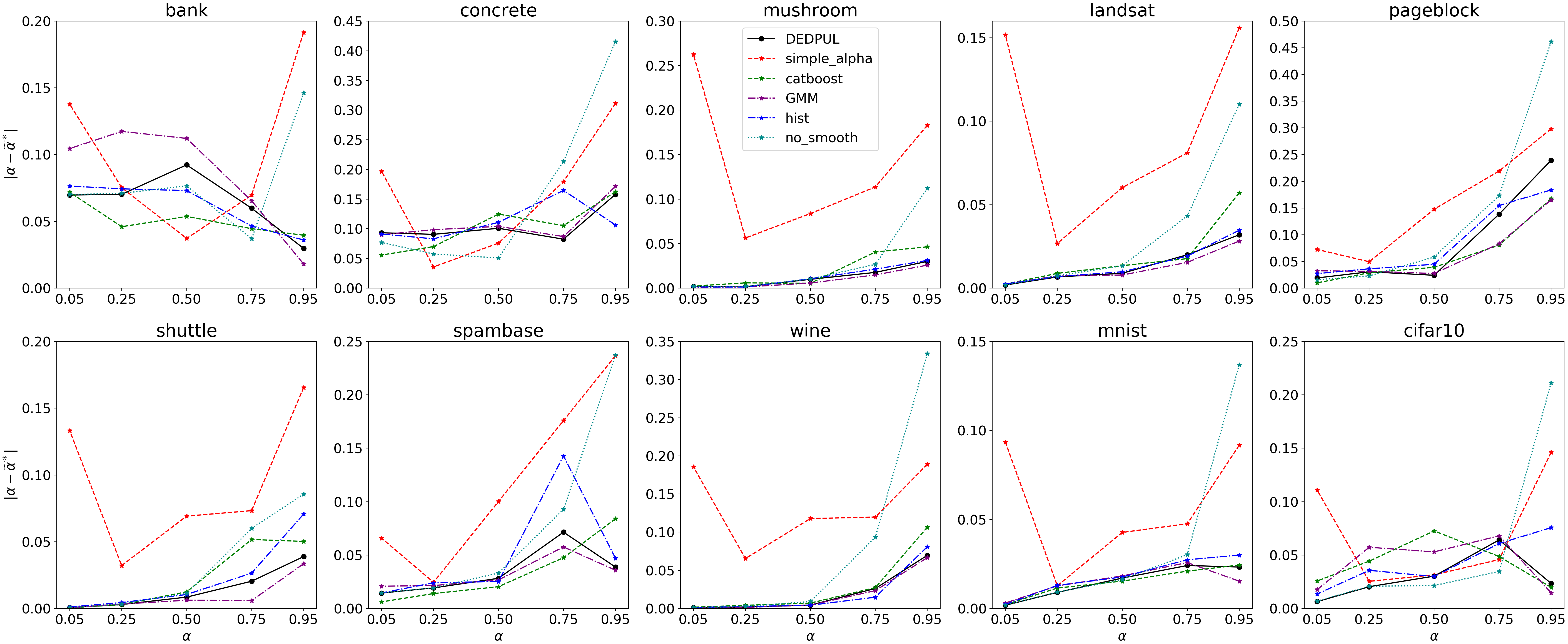}
  \caption{Ablations of DEDPUL, Mixture Proportions Estimation, Benchmark Data}
  \label{fig_mpe_bench_abl}
  
  \includegraphics[width=\linewidth]{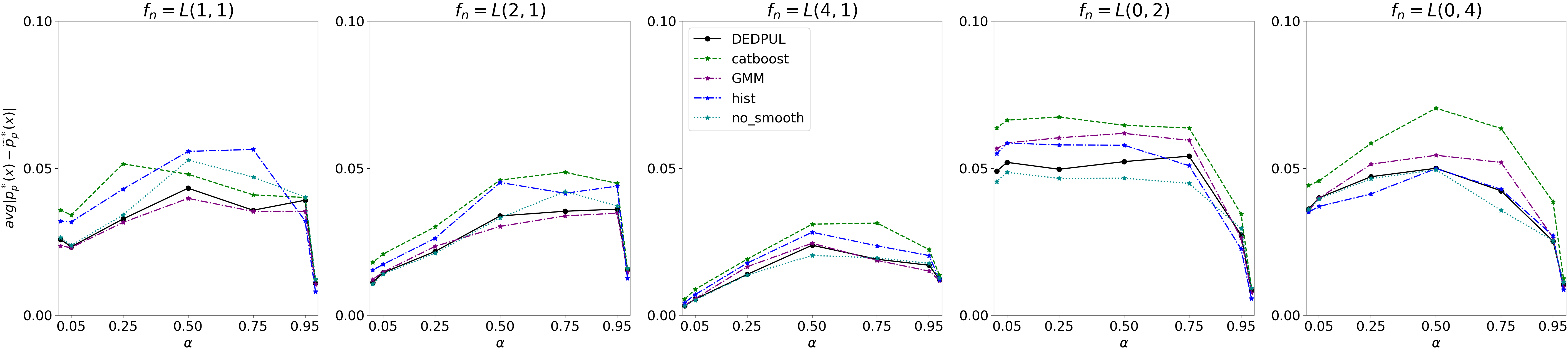}
  \caption{Ablations of DEDPUL, Positive-Unlabeled Classification, Synthetic Data}
  \label{fig_puc_synth_abl}
  
  \includegraphics[width=\linewidth]{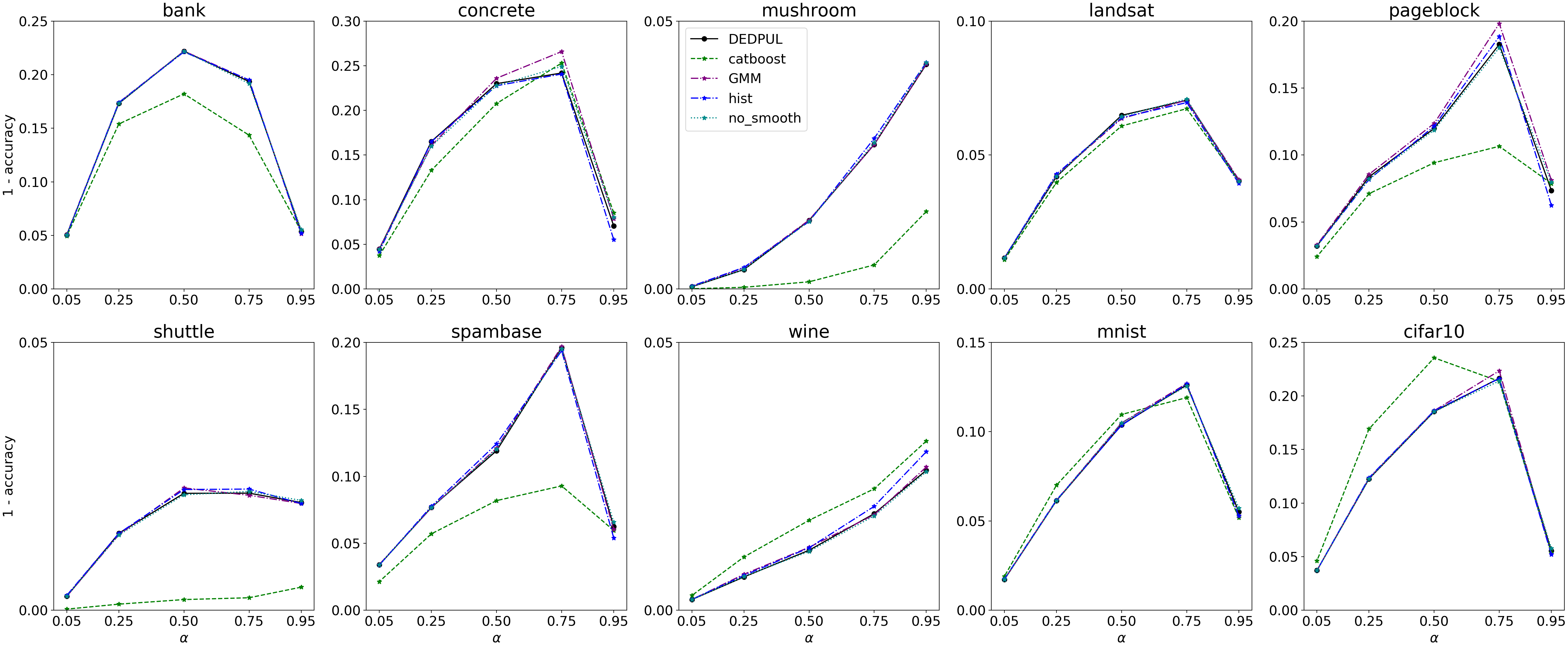}
  \caption{Ablations of DEDPUL, Positive-Unlabeled Classification, Benchmark Data}
  \label{fig_puc_bench_abl}
\end{figure*}

First, we vary NTC. Instead of neural networks, we try gradient boosting of decision trees. We use an open-source library \textit{CatBoost} \cite{catboost}. For the synthetic and the benchmark data respectively, the hight of the trees is limited to 4 and 8, and the learning rate is chosen as 0.1 and 0.01; the amount of trees is defined by early stopping.

Second, we vary density estimation methods. Aside from default kernel density estimation (\textit{kde}), we try Gaussian Mixture Models (\textit{GMM}) and histograms (\textit{hist}). The number of gaussians for GMM is chosen as 20 and 10 for $f_u(x)$ and $f_p(x)$ respectively, and the number of bins for histograms is chosen as 20 for both densities.

Third, we replace the proposed estimates of priors $\alpha_c^*$ and $\alpha_n^*$ with the default estimate (\ref{eq_ntc_preserves_alpha}) (\textit{simple\_alpha}). This is similar to the pdf-ratio baseline from \cite{alphamax}. We replace infimum with 0.05 quantile for improved stability.

Fourth, we remove \textsc{monotonize} and \textsc{rolling\_median} heuristics that smooth the density ratio (\textit{no\_smooth}).

Fifth, we report performance of \textit{EN} method \cite{EN}. This can be seen as ablation of DEDPUL where density estimation is removed altogether and is replaced by simpler equations (\ref{eq_ntc_alpha_star}) and (\ref{eq_ntc_poster_star}).

\paragraph{CatBoost.} In the synthetic setting, CatBoost is outmatched by neural networks as NTC on both tasks (Fig. \ref{fig_mpe_synth_abl}, \ref{fig_puc_synth_abl}). The situation changes in the benchmark setting, where CatBoost outperforms neural networks on most data sets, especially during label assignment (Fig. \ref{fig_puc_bench_abl}). Still, neural networks are a better choice for image data sets. This comparison highlights the flexibility in the choice of NTC for DEDPUL depending on the data at hand.

\paragraph{Density estimation.} On the task of proportion estimation in the synthetic data, \textit{kde} performs a little better than analogues. However, the choice of the density estimation procedure has little effect with no clear trend in other setups (Fig. \ref{fig_mpe_bench_abl}-\ref{fig_puc_bench_abl}), which indicates robustness of DEDPUL to the choice.

\paragraph{Default $\alpha^*$.} On all data sets, using the default $\alpha^*$ estimation method (\ref{eq_ntc_preserves_alpha}) massively degrades the performance, especially when contamination of the unlabeled sample with positives is high (Fig. \ref{fig_mpe_synth_abl}, \ref{fig_mpe_bench_abl}). This is to be expected, since the estimate (\ref{eq_ntc_preserves_alpha}) is a single infimum (or low quantile) point of the noisy density ratio, while the alternatives leverage the entire unlabeled sample. Thus, the proposed estimates $\alpha_c^*$ and $\alpha_n^*$ are vital for DEDPUL stability.

\paragraph{No smoothing.} While not being very impactful on the estimation of posteriors, the smoothing heuristics are vital for the quality of the proportion estimation (Fig. \ref{fig_mpe_synth_abl}, \ref{fig_mpe_bench_abl}).

\paragraph{EN.} In the synthetic setup, performance of EN is decent (Fig. \ref{fig_mpe_synth}, \ref{fig_puc_synth}). It outperforms both versions of TIcE on most and nnPU-sigmoid on all synthetic data sets. In the benchmark setup, EN is the worst procedure for proportion estimation (Fig. \ref{fig_mpe_bench}), but it still outperforms nnPU on some data sets (concrete, landsat, shuttle, spambase) during label assignment (Fig. \ref{fig_puc_bench}). Nevertheless, EN never surpasses DEDPUL, except for L(0, 2).  The striking difference between the two methods might be explained with EN requiring more from NTC, as discussed in Section \ref{section_algorithm_ntc}. 

\section{Conclusion}

In this paper, we have proven that NTC is a posterior-preserving transformation, proposed an alternative method of proportion estimation based on $D(\alpha)$, and combined these findings in DEDPUL. We have conducted an extensive experimental study to show superiority of DEDPUL over current state-of-the-art in Mixture Proportion Estimation and PU Classification. DEDPUL is applicable to a wide range of mixing proportions, is flexible in the choice of the classification algorithm, is robust to the choice of the density estimation procedure, and overall outperforms analogs.

There are several directions to extend our work. Despite our theoretical finding, the reasons behind the behaviour of $D(\alpha)$ in practice remain unclear. In particular, we are uncertain why non-trivial $\alpha_c^*$ does not exist in some cases and what distinguishes these cases. Next, it could be valuable to explore the extensions of DEDPUL to the related problems such as PU multi-classification or mutual contamination. Another promising topic is relaxation of SCAR. Finally, modifying DEDPUL with variational inference might help to internalize density estimation into the classifier's training procedure.

\section*{Acknowledgements}

I sincerely thank Alexander Nesterov, Alexander Sirotkin, Iskander Safiulin,  Ksenia Balabaeva and Vitalia Eliseeva for regular revisions of the paper. Support from the Basic Research Program of the National Research University Higher School of Economics is gratefully acknowledged.

\bibliographystyle{named}
\bibliography{DEDPUL}




\end{document}